\definecolor{red}{rgb}{1,0,0}
\definecolor{blue}{rgb}{0,0,.7}
\definecolor{green}{rgb}{0,.6,0}
\definecolor{purp}{rgb}{.5,0,.5}
\numberwithin{figure}{section}   
\newtheorem{thm}{Theorem}[section]
\newtheorem{cor}[thm]{Corollary}
\newtheorem{lem}[thm]{Lemma}
\theoremstyle{definition}
\theoremstyle{definition}
\theoremstyle{definition}
\newcommand{\opt}{\operatorname{opt}}
\newcommand{\linint}{\operatorname{LININT}}
\newcommand{\bit}{\begin{itemize}}
\newcommand{\eit}{\end{itemize}}
\newcommand{\ben}{\begin{enumerate}}
\newcommand{\een}{\end{enumerate}}
\newcommand{\beq}{\begin{equation}}
\newcommand{\eeq}{\end{equation}}
\newcommand{\bea}{\begin{eqnarray*}} 
\newcommand{\eea}{\end{eqnarray*}}
\newcommand{\bpf}{\begin{proof}}
\newcommand{\epf}{\end{proof}\ms}
\newcommand{\bmt}{\begin{bmatrix}}
\newcommand{\emt}{\end{bmatrix}}
\newcommand{\ms}{\medskip}
\newcommand{\noi}{\noindent}
\title{Sharper bounds for online learning of smooth functions of a single variable}
\author{Jesse Geneson}
\begin{document}
\maketitle

\begin{abstract}
We investigate the generalization of the mistake-bound model to continuous real-valued single variable functions. Let $\mathcal{F}_q$ be the class of absolutely continuous functions $f: [0, 1] \rightarrow \mathbb{R}$ with $||f'||_q \le 1$, and define $\opt_p(\mathcal{F}_q)$ as the best possible bound on the worst-case sum of the $p^{th}$ powers of the absolute prediction errors over any number of trials. Kimber and Long (Theoretical Computer Science, 1995) proved for $q \ge 2$ that $\opt_p(\mathcal{F}_q) = 1$ when $p \ge 2$ and $\opt_p(\mathcal{F}_q) = \infty$ when $p = 1$. For $1 < p < 2$ with $p = 1+\epsilon$, the only known bound was $\opt_p(\mathcal{F}_{q}) = O(\epsilon^{-1})$ from the same paper. We show for all $\epsilon \in (0, 1)$ and $q \ge 2$ that $\opt_{1+\epsilon}(\mathcal{F}_q) = \Theta(\epsilon^{-\frac{1}{2}})$, where the constants in the bound do not depend on $q$. We also show that $\opt_{1+\epsilon}(\mathcal{F}_{\infty}) = \Theta(\epsilon^{-\frac{1}{2}})$. 
\end{abstract}

\noi {\bf Keywords}: online learning, smooth functions, single variable, general loss functions

\section{Introduction}

We consider a model of online learning previously studied in \cite{angluin, ls, lw, myc, klsf, smooth2} where an algorithm $A$ tries to learn a real-valued function $f$ from some class $\mathcal{F}$. In each trial of this model, $A$ receives an input $x_t \in [0, 1]$, it must output some prediction $\hat{y}_t$ for $f(x_t)$, and then $A$ discovers the true value of $f(x_t)$. 

For each $p \ge 1$ and $x = (x_0, \dots, x_m) \in [0, 1]^{m+1}$, define $L_p(A, f, x) = \sum_{t = 1}^m |\hat{y}_t-f(x_t)|^p$. Note that the summation starts on the second trial, since the guess on the first trial does not reflect the algorithm's learning ability. Define $L_p(A, \mathcal{F}) = \displaystyle \sup_{f \in \mathcal{F}, x \in  \cup_{m \in \mathbb{N}} [0,1]^{m+1}} L_p(A, f, x)$. In this paper, we study the optimum $\opt_p(\mathcal{F}) = \displaystyle \inf_A L_p(A, \mathcal{F})$. 

In particular we focus on the class of functions whose first derivatives have $q$-norms at most $1$. For any real number $q \ge 1$, let $\mathcal{F}_q$ be the family of absolutely continuous functions $f: [0, 1] \rightarrow \mathbb{R}$ for which $\int_{0}^1 |f'(x)|^q dx \le 1$. Let $\mathcal{F}_{\infty}$ be the family of absolutely continuous functions $f:  [0, 1] \rightarrow \mathbb{R}$ for which $\displaystyle \sup_{x \in [0, 1]} |f'(x)| \le 1$. By the definition of $\mathcal{F}_{\infty}$ and Jensen's inequality respectively, we have $\mathcal{F}_{\infty} \subseteq \mathcal{F}_r \subseteq \mathcal{F}_q$ for any $1 \le q \le r$. Thus $\opt_p(\mathcal{F}_{\infty}) \le \opt_p(\mathcal{F}_r) \le \opt_p(\mathcal{F}_q)$ for any $1 \le q \le r$.

Kimber and Long \cite{klsf} proved that $\opt_p(\mathcal{F}_1) = \infty$ for all $p \ge 1$. They also showed that $\opt_1(\mathcal{F}_q) = \opt_1(\mathcal{F}_{\infty}) = \infty$ for all $q \ge 1$. In contrast, they found that $\opt_p(\mathcal{F}_q) = \opt_p(\mathcal{F}_{\infty}) = 1$ for all $p \ge 2$ and $q \ge 2$. This was also proved by Faber and Mycielski \cite{FM} using a different algorithm, and a noisy version of this problem was studied in \cite{clw}.

For $p = 1+\epsilon$ with $\epsilon \in (0, 1)$, Kimber and Long proved that $\opt_p(\mathcal{F}_q) = O(\epsilon^{-1})$ for all $q \ge 2$, which implies that $\opt_p(\mathcal{F}_{\infty}) = O(\epsilon^{-1})$. However, these bounds are not sharp. In this paper, we determine $\opt_{1+\epsilon}(\mathcal{F}_q)$ up to a constant factor for all $\epsilon \in (0, 1)$ and $q \ge 2$. 

\begin{thm}\label{mainth}
For all $\epsilon \in (0, 1)$, we have $\opt_{1+\epsilon}(\mathcal{F}_{\infty}) = \Theta(\epsilon^{-\frac{1}{2}})$ and $\opt_{1+\epsilon}(\mathcal{F}_q) = \Theta(\epsilon^{-\frac{1}{2}})$ for all $q \ge 2$, where the constants in the bound do not depend on $q$.
\end{thm}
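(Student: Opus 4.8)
The plan is to prove matching upper and lower bounds, each of order $\epsilon^{-1/2}$. Since $\mathcal{F}_\infty \subseteq \mathcal{F}_q$ for all $q \ge 2$, it suffices to prove the upper bound for $\mathcal{F}_q$ (with a constant independent of $q$, hence in particular for $q = 2$, which dominates all larger $q$) and the lower bound for $\mathcal{F}_\infty$.

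For the upper bound, I would design an explicit prediction strategy and track a potential function. The natural algorithm is the ``interval'' strategy: after seeing points $x_1, \dots, x_{t-1}$ with their values, the Lipschitz-type constraint coming from $\|f'\|_q \le 1$ confines $f(x_t)$ to an interval $[\ell_t, r_t]$ whose width $w_t = r_t - \ell_t$ is controlled by the distances from $x_t$ to the nearest sample points on each side (for $q=2$, by Cauchy--Schwarz, the constraint is that the value can change by at most $\sqrt{\delta}$ over a gap of length $\delta$, with the squared increments summing to at most $1$). Predict the midpoint $\hat y_t = (\ell_t + r_t)/2$, so the error is at most $w_t/2$; the contribution to the loss is at most $(w_t/2)^{1+\epsilon}$. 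The key step is to show $\sum_t w_t^{1+\epsilon} = O(\epsilon^{-(1+\epsilon)/2}) = O(\epsilon^{-1/2})$ no matter how the adversary places points and reveals values. This reduces to a real-analysis lemma: if nonnegative reals $a_i$ satisfy $\sum a_i^2 \le 1$ (these play the role of the per-gap increment budgets), then $\sum a_i^{1+\epsilon}$ is maximized, up to constants, when about $\epsilon^{-1}$ of the $a_i$ are equal to $\epsilon^{1/2}$, giving $\epsilon^{-1} \cdot \epsilon^{(1+\epsilon)/2} = \epsilon^{-(1-\epsilon)/2} = \Theta(\epsilon^{-1/2})$. Making the geometric bookkeeping rigorous — in particular, charging each trial's width to a ``fresh'' portion of the derivative budget so that the $a_i^2$ really do sum to at most a constant, and handling the splitting of gaps as new points arrive — is the main obstacle, since an adversary can repeatedly bisect intervals to force many small-but-not-tiny errors.

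For the lower bound, I would exhibit an adversary forcing loss $\Omega(\epsilon^{-1/2})$ against any algorithm, using only functions in $\mathcal{F}_\infty$. The idea is the standard halving adversary adapted to the $\ell_{1+\epsilon}$ loss: repeatedly query a point in the ``middle'' of the current feasible range and then declare the true value to be whichever of the two extremes is farther from the algorithm's prediction, forcing error at least half the current range width. Run this in $N \approx \epsilon^{-1}$ rounds on a short interval of length about $\epsilon$, so that at each round the feasible half-width is $\Theta(\epsilon^{1/2})$ (the Lipschitz-$1$ constraint allows the value to vary by $\pm \delta$ over an interval of length $\delta$, and one can keep the range from collapsing by working on disjoint sub-intervals or by a careful single-point construction); each round contributes $\Omega((\epsilon^{1/2})^{1+\epsilon}) = \Omega(\epsilon^{1/2})$ to the loss, for a total of $N \cdot \Omega(\epsilon^{1/2}) = \Omega(\epsilon^{-1/2})$. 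I must check that the resulting function extends to an absolutely continuous $f$ on $[0,1]$ with $\|f'\|_\infty \le 1$, which forces the construction to spread the rounds over disjoint portions of the domain and caps the total variation at $1$; balancing ``number of rounds'' against ``per-round width'' under the single global Lipschitz budget is exactly what produces the $\epsilon^{-1/2}$ rate and is the delicate point on this side.

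Finally I would note that the upper-bound constant is manifestly $q$-independent (it uses only $\mathcal{F}_q \subseteq \mathcal{F}_2$) and the lower bound uses $\mathcal{F}_\infty \subseteq \mathcal{F}_q$, so combining the two gives $\opt_{1+\epsilon}(\mathcal{F}_\infty) = \opt_{1+\epsilon}(\mathcal{F}_q) = \Theta(\epsilon^{-1/2})$ uniformly in $q \ge 2$, as claimed.
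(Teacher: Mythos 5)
Your upper-bound sketch has a genuine gap: the ``real-analysis lemma'' you invoke is false as stated. If the only constraint on nonnegative reals $a_1, a_2, \dots$ is $\sum a_i^2 \le 1$, then $\sum a_i^{1+\epsilon}$ is unbounded — take $n$ terms equal to $n^{-1/2}$, giving $\sum a_i^{1+\epsilon} = n^{(1-\epsilon)/2} \to \infty$. The adversary controls the number of trials, so you cannot cap the number of terms at $\epsilon^{-1}$ for free. What the paper actually exploits (via Kimber and Long's Theorem 17) is a \emph{second} constraint of a purely geometric flavor: setting $d_i = \min_{j<i}|x_j - x_i|$, one has $\sum_i d_i^p \le 1 + \frac{1}{2^p-2}$ for every $p>1$, because the $d_i$ come from repeatedly splitting subintervals of $[0,1]$. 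The upper bound then follows from applying H\"older's inequality to the identity $\sum e_i^p = \sum \bigl(e_i^p d_i^{-p/2}\bigr)\cdot d_i^{p/2}$, using both $\sum e_i^2/d_i \le 1$ and $\sum d_i^{p/(2-p)} \le 1 + \frac{1}{2^{p/(2-p)}-2}$; carefully tracking the exponent $p/(2-p) = 1 + \frac{2\epsilon}{1-\epsilon}$ as it approaches $1$ is where the $\epsilon^{-1/2}$ rate comes from. Without the $\sum d_i^p$ bound (or an equivalent geometric ingredient), the midpoint strategy alone does not close the argument.

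Your lower-bound sketch is closer in spirit but has a dimensional error that hides the real difficulty. On an interval of length $\epsilon$ with the Lipschitz-1 constraint, the feasible range of $f$ at a new point has width at most $\epsilon$, not $\Theta(\epsilon^{1/2})$, so ``$N \approx \epsilon^{-1}$ rounds each contributing $\Omega(\epsilon^{1/2})$'' does not arise from the construction you describe. The paper instead runs a multi-stage dyadic adversary across all of $[0,1]$: stage $i$ queries the $2^{i-1}$ midpoints at scale $2^{-i}$, and the per-trial forced error is deliberately \emph{shrunk} to $\frac{\sqrt{\epsilon}(1-\epsilon)^{i/2}}{2^{i+1}}$ rather than the full $\Theta(2^{-i})$ that Lipschitz-1 would permit. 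That shrinkage factor $\sqrt{\epsilon}(1-\epsilon)^{i/2}$ is chosen so that the energy potential $J[f]=\int_0^1 f'(x)^2\,dx$ stays below $1/4$ forever (controlled by the KL lemma $J[f_{S\cup\{(x,y)\}}] = J[f_S] + \tfrac{2(y-f_S(x))^2}{\min_i |x-u_i|}$), which in turn guarantees that at least $2^{i-2}$ trials per stage actually realize the forced error while the adversary's function remains in $\mathcal{F}_\infty$. Summing $\sum_{i\ge 1} 2^{i-2}\bigl(\tfrac{\sqrt{\epsilon}(1-\epsilon)^{i/2}}{2^{i+1}}\bigr)^{1+\epsilon}$ and estimating the resulting geometric series yields $\Omega(\epsilon^{-1/2})$. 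Balancing ``per-round error'' against the cumulative Lipschitz budget is exactly the delicate point you flagged, but it is resolved by the $(1-\epsilon)^{i/2}$ decay and the $J$-potential rather than by confining the rounds to disjoint length-$\epsilon$ subintervals.
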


The proof splits into an upper bound and a lower bound. For the upper bound, we use H\"older's inequality combined with past results of Kimber and Long. For the lower bound, we modify a construction used in \cite{smooth2}, where Long obtained bounds on a finite variant of $\opt_1(\mathcal{F}_q)$ that depends on the number of trials $m$.

\section{Sharp bounds for general loss functions}

In this section, we derive an upper bound and a lower bound to prove Theorem \ref{mainth}. We introduce some terminology from \cite {klsf} and \cite{smooth2} that we use for the proofs in this section. For a function $f: [0, 1] \rightarrow \mathbb{R}$, we define $J[f] = \int_{0}^1 f'(x)^2 dx$. Given a finite subset $S \subseteq [0, 1] \times \mathbb{R}$ with $S = \left\{(u_i, v_i): 1 \le i \le m\right\}$ and $u_1 < u_2 < \cdots < u_m$, we define $f_S: [0, 1] \rightarrow \mathbb{R}$ as follows. Let $f_{\emptyset}(x) = 0$ for all $x$, and for each nonempty $S$ let $f_S$ be the piecewise function defined by $f_S(x) = v_1$ for $x \le u_1$, $f_S(x) = v_i+\frac{(x-u_i)(v_{i+1}-v_i)}{u_{i+1}-u_i}$ for $x \in (u_i, u_{i+1}]$, and $f_S(x) = v_m$ for $x > u_m$. 

For the upper bound, we use the $\linint$ learning algorithm which is defined using $f_S$. Specifically we define $\linint(\emptyset, x_1) = 0$ and $\linint(((x_1, y_1), \dots, (x_{t-1},y_{t-1}),x_t) = f_{\left\{(x_1, y_1), \dots, (x_{t-1},y_{t-1}) \right\}}(x_t)$. The beginning of our proof is similar to Kimber and Long's proof from \cite{klsf} that $\opt_{1+\epsilon}(\mathcal{F}_2) = O(\epsilon^{-1})$ for $\epsilon \in (0, 1)$, but we change the end of the proof by using H\"older's inequality to obtain a sharper bound of $\opt_{1+\epsilon}(\mathcal{F}_2) = O(\epsilon^{-\frac{1}{2}})$.

\begin{thm}\label{upperbound}
If $\epsilon \in (0, 1)$, then $\opt_{1+\epsilon}(\mathcal{F}_2) = O(\epsilon^{-\frac{1}{2}})$.
\end{thm}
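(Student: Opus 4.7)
The plan is to keep the first half of Kimber and Long's analysis of $\linint$ and only change the final estimation step by inserting a single application of H\"older's inequality. At trial $t$ the algorithm has already recorded $t-1$ data points, so $x_t$ falls in some interval of the current partition whose length $\ell_t=a_t+b_t$ splits into lengths $a_t,b_t$ on the two sides of $x_t$; set $s_t:=\min(a_t,b_t)$. A short computation, essentially the one Kimber and Long carry out, gives the energy-increment identity
\[
\Delta_t \;:=\; J[f_{S_t}]-J[f_{S_{t-1}}] \;=\; \frac{e_t^2}{a_t}+\frac{e_t^2}{b_t} \;=\; \frac{e_t^2\,\ell_t}{a_t b_t},
\]
where $S_t$ is the data collected through trial $t$ and $e_t=\hat{y}_t-y_t$ (formally one takes $b_t=\infty$ when $x_t$ lies beyond every recorded point, so that the second term drops). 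Since $f_{S_t}$ is the minimum-energy interpolant of the observed data, $\sum_t\Delta_t=J[f_{S_m}]\le J[f]\le 1$. Solving the display for $e_t^2$ and using $a_t b_t/(a_t+b_t)\le\min(a_t,b_t)=s_t$ yields the per-trial estimate $e_t^2\le\Delta_t\,s_t$.

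The new step is to apply H\"older's inequality with the conjugate pair $p=2/(1+\epsilon)$, $q=2/(1-\epsilon)$, both greater than $1$ because $\epsilon\in(0,1)$. Applied to the factorization $|e_t|^{1+\epsilon}\le\Delta_t^{(1+\epsilon)/2}s_t^{(1+\epsilon)/2}$ it produces
\[
\sum_t |e_t|^{1+\epsilon}\;\le\;\Bigl(\sum_t\Delta_t\Bigr)^{\!(1+\epsilon)/2}\Bigl(\sum_t s_t^{(1+\epsilon)/(1-\epsilon)}\Bigr)^{\!(1-\epsilon)/2}.
\]
The first factor is at most $1$ by the energy bound, so the whole problem reduces to controlling $\sum_t s_t^{\alpha}$ with $\alpha:=(1+\epsilon)/(1-\epsilon)>1$.

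The main obstacle is this last bound, and I would attack it with a dyadic size-class argument. For each $k\ge 0$ set $A_k:=\{t:2^{-k-1}<s_t\le 2^{-k}\}$, and let $I_t\subseteq[0,1]$ denote the newly created sub-interval of length $s_t$ (the smaller child when $x_t$ splits a bounded partition interval, the new linearly interpolated segment when $x_t$ lies past all current data points). Because the partition intervals form a split tree, any two $I_t,I_{t'}$ are either disjoint or nested; if $t<t'$ with $I_{t'}\subsetneq I_t$ and both $t,t'\in A_k$, then at time $t'$ the point $x_{t'}$ sits in some partition interval $J\subseteq I_t$, and a quick case check rules out $J$ being a side interval and forces $|I_{t'}|\le|J|/2\le 2^{-k-1}$, contradicting $t'\in A_k$. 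Pairwise disjointness inside $A_k$ then yields $|A_k|\cdot 2^{-k-1}<\sum_{t\in A_k}|I_t|\le 1$, so $|A_k|\le 2^{k+1}$, and summing the resulting geometric series gives
\[
\sum_t s_t^{\alpha}\;\le\;\sum_{k\ge 0}2^{k+1}\cdot 2^{-k\alpha}\;=\;\frac{2}{1-2^{1-\alpha}}.
\]
For $\alpha=(1+\epsilon)/(1-\epsilon)$ one has $1-2^{1-\alpha}=1-2^{-2\epsilon/(1-\epsilon)}=\Theta(\epsilon)$, so $\sum_t s_t^{\alpha}=O(1/\epsilon)$. Substituting back into the H\"older estimate gives $\sum_t|e_t|^{1+\epsilon}=O(\epsilon^{-(1-\epsilon)/2})=O(\epsilon^{-1/2})$, as required.
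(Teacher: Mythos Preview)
Your argument is correct and follows the same route as the paper: both analyze $\linint$, use the energy-increment inequality $\sum_t e_t^2/d_t\le 1$ (your telescoping bound $\sum_t\Delta_t\le 1$ is the same thing, in fact slightly stronger), and then apply H\"older with the conjugate exponents $\tfrac{2}{1+\epsilon}$ and $\tfrac{2}{1-\epsilon}$ to reduce everything to controlling $\sum_t d_t^{(1+\epsilon)/(1-\epsilon)}$. The only cosmetic difference is that the paper simply cites Kimber--Long's inequality $\sum_t d_t^{\alpha}\le 1+\tfrac{1}{2^{\alpha}-2}$, whereas you reprove an equivalent $O(1/\epsilon)$ estimate via your dyadic disjointness argument on the intervals $I_t$.
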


\begin{proof}
Fix $\epsilon \in  (0, 1)$ and let $p = 1+\epsilon$. Let $x_0, \dots, x_m$ be any sequence of elements of $[0, 1]$, and let $f \in \mathcal{F}_2$. Let $\hat{y}_1, \dots, \hat{y}_m$ be $\linint$'s predictions on trials $1, \dots, m$. For each $i > 1$, let $d_i = \min_{j < i} |x_j - x_i|$ and let $e_i = |\hat{y}_i - f(x_i)|$. Kimber and Long proved in Theorem 17 of \cite{klsf} that $\sum_{i = 1}^m \frac{e_i^2}{d_i} \le 1$. In the same theorem, they also proved that $\sum_{i = 1}^m d_i^p \le 1 + \frac{1}{2^p - 2}$ for $p > 1$. This is where our proofs diverge. 

First, note that $\sum_{i = 1}^m e_i^p = \sum_{i = 1}^m \frac{e_i^p}{d_i^{\frac{p}{2}}} \cdot d_i^{\frac{p}{2}}$. By H\"older's inequality, we have 

\begin{align*}
 \sum_{i = 1}^m \frac{e_i^p}{d_i^{\frac{p}{2}}} \cdot d_i^{\frac{p}{2}} \le \\
(\sum_{i = 1}^m (\frac{e_i^p}{d_i^{\frac{p}{2}}})^{\frac{2}{p}})^{\frac{p}{2}} ( \sum_{i = 1}^m  (d_i^{\frac{p}{2}})^{\frac{2}{2-p}})^{1 - \frac{p}{2}}.
\end{align*}

Note that $\sum_{i = 1}^m (\frac{e_i^p}{d_i^{\frac{p}{2}}})^{\frac{2}{p}} = \sum_{i = 1}^m  \frac{e_i^2}{d_i} \le 1$ by the first bound from \cite{klsf} that we cited in the first paragraph. By the second bound from \cite{klsf} that we cited in the first paragraph, we have $\sum_{i = 1}^m  (d_i^{\frac{p}{2}})^{\frac{2}{2-p}} = \sum_{i = 1}^m  d_i^{\frac{p}{2-p}}  \le 1 + \frac{1}{2^{\frac{p}{2-p}}-2}$ since $\frac{p}{2-p} > 1$.

Thus $( \sum_{i = 1}^m  (d_i^{\frac{p}{2}})^{\frac{2}{2-p}})^{1 - \frac{p}{2}} \le (1 + \frac{1}{2^{\frac{p}{2-p}}-2})^{1-\frac{p}{2}} =  (1 + \frac{1}{2^{\frac{1+\epsilon}{1-\epsilon}}-2})^{\frac{1-\epsilon}{2}}$.

Let $\delta = \frac{1+\epsilon}{1-\epsilon}-1$, and note that $\delta = \frac{2\epsilon}{1-\epsilon} \ge 2\epsilon$ since $\epsilon \in (0, 1)$. Thus $(1 + \frac{1}{2^{\frac{1+\epsilon}{1-\epsilon}}-2})^{\frac{1-\epsilon}{2}} = (1 + \frac{1}{2^{1+\delta}-2})^{\frac{1-\epsilon}{2}} = O(\delta^{-\frac{1-\epsilon}{2}})$ since $e^{\delta \ln{2}} \ge 1+\delta \ln{2}$. Moreover $\delta^{-\frac{1-\epsilon}{2}} = O(\epsilon^{-\frac{1-\epsilon}{2}}) = O(\epsilon^{-\frac{1}{2}})$ since $\delta \ge 2\epsilon$ and $\epsilon^\epsilon = \Theta(1)$ for $\epsilon \in (0, 1)$.

Thus we have proved that $\sum_{i = 1}^m e_i^p  = O(\epsilon^{-\frac{1}{2}})$, so $\opt_{1+\epsilon}(\mathcal{F}_2) = O(\epsilon^{-\frac{1}{2}})$.
\end{proof}

We obtain the next corollary from Theorem \ref{upperbound} since $\opt_p(\mathcal{F}_{\infty}) \le \opt_p(\mathcal{F}_r) \le \opt_p(\mathcal{F}_q)$ whenever $1 \le q \le r$.

\begin{cor}
If $\epsilon \in (0, 1)$, then $\opt_{1+\epsilon}(\mathcal{F}_{\infty}) = O(\epsilon^{-\frac{1}{2}})$ and $\opt_{1+\epsilon}(\mathcal{F}_q) = O(\epsilon^{-\frac{1}{2}})$ for all $q \ge 2$, where the constant does not depend on $q$.
\end{cor}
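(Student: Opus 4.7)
The plan is to derive this result immediately from Theorem \ref{upperbound} by exploiting the monotonicity of $\opt_p$ under class inclusion. The introduction already establishes via Jensen's inequality that $\mathcal{F}_\infty \subseteq \mathcal{F}_r \subseteq \mathcal{F}_q$ whenever $1 \le q \le r$, which in turn yields the comparison $\opt_p(\mathcal{F}_\infty) \le \opt_p(\mathcal{F}_r) \le \opt_p(\mathcal{F}_q)$. So the task reduces to chaining the correct instance of this inequality against the bound just proved.

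Concretely, I would first note the one-line argument behind the class monotonicity: if $\mathcal{G} \subseteq \mathcal{F}$, then for any learning algorithm $A$ the supremum $L_p(A, \mathcal{G})$ is taken over a subset of the sequences ranged over by $L_p(A, \mathcal{F})$, hence $L_p(A, \mathcal{G}) \le L_p(A, \mathcal{F})$; taking infima over $A$ preserves the inequality. I would then specialize this to the chain $\mathcal{F}_\infty \subseteq \mathcal{F}_q \subseteq \mathcal{F}_2$, valid for every $q \ge 2$, to obtain
\[
\opt_{1+\epsilon}(\mathcal{F}_\infty) \le \opt_{1+\epsilon}(\mathcal{F}_q) \le \opt_{1+\epsilon}(\mathcal{F}_2).
\]
Applying Theorem \ref{upperbound} to bound the rightmost quantity by $O(\epsilon^{-1/2})$ then gives the same asymptotic bound for both $\opt_{1+\epsilon}(\mathcal{F}_\infty)$ and $\opt_{1+\epsilon}(\mathcal{F}_q)$.

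There is essentially no obstacle to this argument; the only point worth flagging is the uniformity-in-$q$ claim. This is immediate because the hidden constant inherited from Theorem \ref{upperbound} depends only on $\epsilon$: the theorem is stated for the single class $\mathcal{F}_2$ and has no $q$-dependence at all, and the transfer to $\mathcal{F}_q$ and $\mathcal{F}_\infty$ is effected purely through the set inclusion rather than through any $q$-dependent constant in the algorithm's analysis.
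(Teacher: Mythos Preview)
Your argument is correct and matches the paper's own justification: the corollary is derived directly from Theorem~\ref{upperbound} via the class-inclusion monotonicity $\opt_p(\mathcal{F}_\infty)\le\opt_p(\mathcal{F}_q)\le\opt_p(\mathcal{F}_2)$ for $q\ge 2$. Your added remarks on why monotonicity holds and why the constant is independent of $q$ are accurate and simply make explicit what the paper states in one line.
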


In order to show that the last theorem is sharp up to a constant factor, we construct a family of functions in $\mathcal{F}_{\infty}$. Our proof uses the following lemma from \cite{klsf} which was also used in \cite{smooth2}.

\begin{lem}\label{kl_jlem}
Let $S \subseteq [0, 1] \times \mathbb{R}$ with $S = \left\{(u_i, v_i): 1 \le i \le m\right\}$ and $u_1 < u_2 < \cdots < u_m$. If $(x,y) \in [0, 1] \times \mathbb{R}$ and there exists $1 \le j \le m$ such that $|x-u_j| = |x-u_{j+1}| = \min_i |x-u_i|$, then $J[f_{S \cup \left\{(x,y) \right\}}] = J[f_S] + \frac{2(y-f_S(x))^2}{\min_i |x-u_i|}$.
\end{lem}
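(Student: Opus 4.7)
The plan is to reduce the identity to a computation on a single subinterval. The hypothesis $|x-u_j| = |x-u_{j+1}| = \min_i|x-u_i|$ forces $x$ to be the midpoint of $u_j$ and $u_{j+1}$ (since these are the two nearest knots and they are equidistant), and also forces $x \in (u_1, u_m)$ with no other $u_i$ any closer, so in particular no $u_i$ lies strictly between $u_j$ and $u_{j+1}$. Setting $d := \min_i|x-u_i|$, this gives $u_{j+1}-u_j = 2d$ and $x = (u_j+u_{j+1})/2$.

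Next I would observe that the piecewise-linear function $f_{S\cup\{(x,y)\}}$ coincides with $f_S$ everywhere outside the interval $[u_j,u_{j+1}]$, since inserting the knot $(x,y)$ only subdivides the single linear piece of $f_S$ over $[u_j,u_{j+1}]$ (the constant tails outside $[u_1,u_m]$ are unaffected because $x$ lies in the interior). Consequently $J[f_{S\cup\{(x,y)\}}] - J[f_S]$ equals the difference of the two integrals of $(f')^2$ over $[u_j,u_{j+1}]$.

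On $[u_j,u_{j+1}]$, $f_S$ has constant slope $(v_{j+1}-v_j)/(2d)$, while $f_{S\cup\{(x,y)\}}$ has slope $(y-v_j)/d$ on $[u_j,x]$ and $(v_{j+1}-y)/d$ on $[x,u_{j+1}]$. Since each piece has length $d$ (for the new function) or $2d$ (for the old), evaluating each integral and subtracting reduces the claim to the elementary algebraic identity
\[
(y-v_j)^2 + (v_{j+1}-y)^2 - \tfrac{1}{2}(v_{j+1}-v_j)^2 \;=\; 2\!\left(y - \tfrac{v_j+v_{j+1}}{2}\right)^{\!2},
\]
which is a direct expansion. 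Because $x$ is the midpoint of $u_j,u_{j+1}$, the definition of $f_S$ gives $f_S(x) = (v_j+v_{j+1})/2$, so the right-hand side equals $2(y-f_S(x))^2$, and dividing by $d$ yields the claimed formula.

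The only thing to watch is the bookkeeping of the domain: I must confirm that the new function really does agree with the old outside $[u_j,u_{j+1}]$, which requires knowing both that $x$ lies strictly between $u_j$ and $u_{j+1}$ and that no other knot does. Both are immediate from strict monotonicity of the $u_i$ combined with the double-minimum hypothesis, so there is no real obstacle beyond the single-line algebraic expansion above.
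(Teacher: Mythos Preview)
Your argument is correct. The hypothesis indeed forces $x=(u_j+u_{j+1})/2\in(u_j,u_{j+1})$ with $u_{j+1}-u_j=2d$, the two piecewise-linear interpolants agree off $[u_j,u_{j+1}]$, and the remaining algebra
\[
(y-v_j)^2+(v_{j+1}-y)^2-\tfrac12(v_{j+1}-v_j)^2=\tfrac12\bigl((y-v_j)-(v_{j+1}-y)\bigr)^2=2\Bigl(y-\tfrac{v_j+v_{j+1}}{2}\Bigr)^2
\]
together with $f_S(x)=(v_j+v_{j+1})/2$ gives the stated identity after dividing by $d$.

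As for comparison: the paper does not supply its own proof of this lemma at all---it simply quotes the result from Kimber and Long \cite{klsf} (and notes that it was also used in \cite{smooth2}). So your write-up is not an alternative to the paper's argument but rather fills in a proof that the paper omits by citation; your direct midpoint computation is exactly the natural way to establish it.
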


The construction and method in the following proof is very similar to one used by Long in \cite{smooth2} to obtain bounds for a finite variant of $\opt_1(\mathcal{F}_q)$ for $q \ge 2$ that depends on the number of trials $m$. The proofs differ in that we have adjusted the parameters of the construction to give the desired lower bound in terms of $\epsilon$, and some summations that were finite in Long's proof are infinite in the next proof.

\begin{thm}\label{lowerbound}
If $\epsilon \in (0, 1)$, then $\opt_{1+\epsilon}(\mathcal{F}_{\infty}) = \Omega(\epsilon^{-\frac{1}{2}})$.
\end{thm}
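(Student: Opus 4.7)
The plan is to exhibit an adaptive adversarial strategy, in the spirit of Long's construction in \cite{smooth2}, that plays a binary tree of midpoint queries and forces loss $\Omega(\epsilon^{-1/2})$ against any online learner. Concretely, starting from initial data $(x_0, f(x_0)) = (0, 0)$, the adversary proceeds level-by-level: at level $k$ there are $2^{k-1}$ queries, each at a dyadic midpoint of distance $d_k = 2^{-k}$ from its nearest predecessor. At each query $x_t$, after observing the learner's prediction $\hat{y}_t$, the adversary reveals $y_t = f_S(x_t) + \sigma_t e_k$ with the sign $\sigma_t \in \{-1,+1\}$ chosen opposite to $\hat y_t - f_S(x_t)$, so the instantaneous error is at least $e_k$.

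The per-level magnitudes $e_k$ are chosen to match the extremal profile of H\"older's inequality used in the upper bound proof, namely $e_k \asymp c \cdot d_k^{1/(1-\epsilon)}$ for a constant $c = c(\epsilon)$. With this choice Lemma~\ref{kl_jlem} lets one compute $J[f_S] = \sum_{k} 2^{k-1} \cdot 2 e_k^2 / d_k$ as a convergent geometric series in $k$, and the maximum piecewise slope of $f_S$ along a nested chain of midpoints factors as another geometric series $\sum_k e_k/d_k$. Forcing the latter to be at most $1$ (so that $|f_S'| \le 1$ everywhere, hence $f_S \in \mathcal{F}_\infty$) requires $c = \Theta(\sqrt{\epsilon})$. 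With this value of $c$, the forced loss $\sum_{k} 2^{k-1} e_k^{1+\epsilon}$ is a third geometric series (ratio $2^{-2\epsilon/(1-\epsilon)}$) evaluating to $\Theta(\epsilon^{-1/2})$, completing the lower bound.

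The order of operations is therefore: (i) define the adversary and the per-level parameters $e_k$; (ii) apply Lemma~\ref{kl_jlem} recursively, obtaining a closed geometric expression for $J[f_S]$; (iii) track how the slope of the piecewise-linear interpolant changes on each insertion and bound the cumulative slope along each root-to-leaf chain of nested midpoints, concluding $|f_S'| \le 1$ and hence $f_S \in \mathcal{F}_\infty$; (iv) sum the forced loss as a geometric series and identify the resulting bound as $\Theta(\epsilon^{-1/2})$. The construction is taken of unbounded depth $K \to \infty$, turning what are finite sums in Long's finite-$m$ argument into the convergent geometric series indicated by the excerpt.

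The main obstacle I anticipate is step (iii): the $\mathcal{F}_\infty$ slope bound is strictly stronger than the $L^2$ bound $J[f_S] \le 1$ that Long used for $\mathcal{F}_q$ with $q\ge 2$, and it is not automatic that the Long-style parameter scaling accommodates it — one has to check that $c = \Theta(\sqrt{\epsilon})$ is simultaneously (a) small enough that the geometric sum $\sum e_k/d_k$ is bounded by $1$, and (b) large enough that the loss geometric series $\sum 2^{k-1} e_k^{1+\epsilon}$ still scales as $\epsilon^{-1/2}$. This balance, together with the careful handling of the adversarial sign choices so that slopes do not accumulate catastrophically along any root-to-leaf chain, is precisely the ``adjustment of parameters'' promised in the excerpt and is where the bulk of the careful calculation must live.
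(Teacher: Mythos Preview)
There is a genuine gap in step (iii). With your choice $e_k = c\,d_k^{1/(1-\epsilon)}$ and $d_k = 2^{-k}$, the slope-chain sum is $\sum_{k\ge 1} e_k/d_k = c\sum_{k\ge 1} 2^{-k\epsilon/(1-\epsilon)}$, a geometric series with ratio $2^{-\epsilon/(1-\epsilon)} = 1 - \Theta(\epsilon)$; it therefore sums to $\Theta(c/\epsilon)$, not $\Theta(c/\sqrt{\epsilon})$. Forcing this sum to be at most $1$ requires $c = O(\epsilon)$, not $c = \Theta(\sqrt{\epsilon})$. (It is the $J$-computation in step (ii) that yields $c = O(\sqrt{\epsilon})$; you have conflated the $L^2$ constraint $J[f_S]\le 1$ with the strictly stronger $L^\infty$ constraint $|f_S'|\le 1$.) With $c = \Theta(\epsilon)$ your loss series $\sum_k 2^{k-1}e_k^{1+\epsilon} = \Theta(c^{1+\epsilon}/\epsilon)$ evaluates to only $\Theta(1)$, not $\Theta(\epsilon^{-1/2})$. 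And the worst-case slope accumulation genuinely occurs: since the adversary's sign at each trial is dictated by the learner's prediction, a learner that always predicts just below the current interpolant forces every perturbation along a root-to-leaf chain to have the same sign, realising the full sum $\sum_k e_k/d_k$ as an actual slope of $f_S$. So ``careful handling of the adversarial sign choices'' cannot rescue the argument.

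The paper's proof closes this gap with an additional mechanism you are missing. At each trial the adversary first computes the perturbed value $v_t = f_{t-1}(x_t) \pm e_i$ (with $e_i = \Theta(\sqrt{\epsilon}\,(1-\epsilon)^{i/2}2^{-i})$), but only commits to $y_t = v_t$ when doing so keeps the slope on both adjacent length-$2^{-i}$ intervals at most $1$; otherwise it aborts and sets $y_t = f_{t-1}(x_t)$, so $f_t \in \mathcal{F}_\infty$ holds by construction rather than by a chain estimate. An \emph{auxiliary} interpolant $g_{i,j}$ that records all the $v_t$'s (including the aborted ones) is tracked in parallel via Lemma~\ref{kl_jlem}, and one shows $J[g_{i,2^{i-1}}] \le \tfrac14$ --- this is where $c = \Theta(\sqrt{\epsilon})$ enters. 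Each aborted trial contributes to $g$ an interval of length $2^{-i}$ with slope exceeding $1$, hence contributes at least $2^{-i}$ to $J[g_{i,2^{i-1}}]$; the bound $J\le \tfrac14$ then forces at most $2^{i-2}$ aborts in stage $i$. Thus at least half the trials per stage still incur error $\ge e_i$, and the loss sum proceeds to give $\Omega(\epsilon^{-1/2})$ as you outlined. The abort-plus-counting argument is the essential idea that lets one use perturbations of scale $\Theta(\sqrt{\epsilon})$ while remaining inside $\mathcal{F}_\infty$.
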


\begin{proof}
Since $\opt_{1+\epsilon}(\mathcal{F}_{\infty}) \ge 1$ for all $\epsilon \in (0, 1)$, it suffices to prove the theorem for $\epsilon \in (0, \frac{1}{2})$. Define $x_0 = 1$ and $y_0 = 0$. For natural numbers $i, j$ with $0 \le j < 2^{i-1}$, define $x_{2^{i-1}+j} = \frac{1}{2^i}+\frac{j}{2^{i-1}}$. For each $i = 1, 2, \dots$, we consider the trials for $x_{2^{i-1}}, \dots, x_{2^i - 1}$ to be part of stage $i$, so that $x_1 = \frac{1}{2}$ is in stage $1$, $x_2 = \frac{1}{4}$ and $x_3 = \frac{3}{4}$ are in stage $2$, and so on.

Let $A$ be any algorithm for learning $\mathcal{F}_{\infty}$. Using $A$, we construct an infinite sequence of functions $f_0, f_1, \dots \in \mathcal{F}_{\infty}$ and an infinite sequence of real numbers $y_0, y_1, \dots$ for which $f_i$ is consistent with the $x_k$ and $y_k$ values for $k \le i$ and $A$ has total $(1+\epsilon)$-error at least $\sum_{k = 1}^i 2^{k-2} (\frac{\sqrt{\epsilon}(1-\epsilon)^{\frac{k}{2}}}{2^{k+1}})^{1+\epsilon}$ when $f_i$ is the target function. This will imply that $\opt_{1+\epsilon}(\mathcal{F}_{\infty}) \ge \sum_{k = 1}^{\infty} 2^{k-2} (\frac{\sqrt{\epsilon}(1-\epsilon)^{\frac{k}{2}}}{2^{k+1}})^{1+\epsilon}$. 

For the proof, we will also define another infinite sequence of functions $g_{i, j}$ with $0 \le j \le 2^{i-1}$ and another infinite sequence of real numbers $v_1, v_2, \dots$. We start by letting $f_0$ be the $0$-function.

Fix a stage $i$, and let $g_{i, 0} = f_{2^{i-1}-1}$. Let $t$ be a trial in stage $i$, and let $v_t$ be whichever of $f_{t-1}(x_t) \pm \frac{\sqrt{\epsilon}(1-\epsilon)^{\frac{i}{2}}}{2^{i+1}}$ is furthest from $\hat{y}_t$. Let $g_{i, t-2^{i-1}+1}$ be the function which linearly interpolates $\left\{(0, 0), (1, 0) \right\} \cup \left\{(x_s, y_s): s < 2^{i-1} \right\} \cup \left\{(x_s, v_s): 2^{i-1} \le s \le t \right\}$. 

For any $t \ge 1$, let $L_t$ and $R_t$ be the elements of $\left\{0, 1 \right\} \cup \left\{x_s: s < t \right\}$ that are closest to $x_t$ on the left and right respectively. If both $|v_t - f_{t-1}(L_t)| \le 2^{-i}$ and $|v_t - f_{t-1}(R_t)| \le 2^{-i}$, then let $y_t = v_t$. Otherwise we let $y_t = f_{t-1}(x_t)$. Finally, we define $f_t$ to be the function which linearly interpolates $\left\{(0, 0), (1, 0)\right\} \cup \left\{(x_s, y_s): s \le t\right\}$.

By definition, we have $f_t \in \mathcal{F}_{\infty}$ for each $t \ge 0$. We will prove next that for all $i, j$ we have $J[g_{i, j}] \le \frac{1}{4}$. The proof will use double induction, first on $i$ and then on $j$, and we will prove a slightly stronger statement. 

In order to prove that $J[g_{i, j}] \le \frac{1}{4}$ for all $i$ and $j$, we will prove that $J[f_{2^{i-1}-1}] \le \frac{\epsilon}{4} \sum_{k = 0}^{i-1} (1-\epsilon)^k$ for all $i \ge 1$. Note that this is equivalent to proving that $J[g_{i, 0}] \le \frac{\epsilon}{4} \sum_{k = 0}^{i-1} (1-\epsilon)^k$ for all $i \ge 1$. Clearly this is true for $i = 1$, which is the base case of the induction on $i$.

Fix some stage $i \ge 1$. We will assume that $J[f_{2^{i-1}-1}] \le \frac{\epsilon}{4} \sum_{k = 0}^{i-1} (1-\epsilon)^k$, and use this to prove that $J[f_{2^{i}-1}] \le \frac{\epsilon}{4} \sum_{k = 0}^{i} (1-\epsilon)^k$. In order to prove that $J[f_{2^{i}-1}] \le \frac{\epsilon}{4} \sum_{k = 0}^{i} (1-\epsilon)^k$, we will prove the stronger claim that $J[g_{i, j}] \le (\frac{\epsilon}{4} \sum_{k = 0}^{i-1} (1-\epsilon)^k )+ \frac{j \epsilon (1-\epsilon)^i }{2^{i+1}}$. This follows from the inductive hypothesis for $i$ and the definition of $g_{i, 0}$ when $j = 0$, which is the base case of the induction on $j$.

Fix some integer $j$ with $0 \le j \le 2^{i-1}-1$ and assume that $J[g_{i, j}] \le (\frac{\epsilon}{4} \sum_{k = 0}^{i-1} (1-\epsilon)^k )+ \frac{j \epsilon (1-\epsilon)^i }{2^{i+1}}$. By Lemma \ref{kl_jlem}, we have $J[g_{i, j+1}] = J[g_{i, j}]+ \frac{2(\frac{\sqrt{\epsilon}(1-\epsilon)^{\frac{i}{2}}}{2^{i+1}})^2}{2^{-i}} = J[g_{i, j}]+\frac{\epsilon (1-\epsilon)^i}{2^{i+1}}$.

By the inductive hypothesis for $j$, we obtain $J[g_{i, j+1}] \le (\frac{\epsilon}{4} \sum_{k = 0}^{i-1} (1-\epsilon)^k)+ \frac{j \epsilon (1-\epsilon)^i }{2^{i+1}} + \frac{\epsilon (1-\epsilon)^i}{2^{i+1}}$, which completes the inductive step for $j$. Substituting $j = 2^{i-1}$, we obtain $J[g_{i,2^{i-1}}] \le  \frac{\epsilon}{4} \sum_{k = 0}^{i} (1-\epsilon)^k$. Note that Lemma \ref{kl_jlem} implies that $J[f_{2^{i-1}-1+j}] \le J[g_{i, j}]$ for all $j = 0, \dots, 2^{i-1}$, so we obtain $J[f_{2^i-1}] \le  \frac{\epsilon}{4} \sum_{k = 0}^i (1-\epsilon)^k$ using $j = 2^{i-1}$, which completes the inductive step for $i$.

Since $J[g_{i, j}] \le (\frac{\epsilon}{4} \sum_{k = 0}^{i-1} (1-\epsilon)^k )+ \frac{j \epsilon (1-\epsilon)^i }{2^{i+1}}$, we obtain $J[g_{i, j}] \le (\frac{\epsilon}{4} \sum_{k = 0}^{i-1} (1-\epsilon)^k )+ \frac{\epsilon (1-\epsilon)^i }{4} = \frac{\epsilon}{4} \sum_{k = 0}^{i} (1-\epsilon)^k$ for all $j$ with $0 \le j \le 2^{i-1}$. Note that $\frac{\epsilon}{4} \sum_{k = 0}^{i} (1-\epsilon)^k <  \frac{\epsilon}{4} \sum_{k = 0}^{\infty} (1-\epsilon)^k = \frac{1}{4}$.

For each $i \ge 1$, we claim that $y_t = f_{t-1}(x_t)$ for at most half of the trials $t$ in stage $i$. For each trial $t$ with $y_t = f_{t-1}(x_t)$, note that the absolute value of the slope of $g_{i, t-2^{i-1}+1}$ must exceed $1$ in at least one of the intervals of length $2^{-i}$ on either side of $x_t$. If $y_t = f_{t-1}(x_t)$ for at least $b$ of the trials in stage $i$, then restricting to intervals of slope at least $1$ implies that $J[g_{i, 2^{i-1}}] \ge b 2^{-i}$.

Since $J[g_{i, 2^{i-1}}] \le \frac{1}{4}$, we must have $b \le 2^{i-2}$. Thus during stage $i$, there are at most $2^{i-2}$ trials $t$ with $y_t = f_{t-1}(x_t)$, which implies that there are at least $2^{i-2}$ trials with $y_t = v_t$. In each of those trials, $A$ was off by at least $\frac{\sqrt{\epsilon}(1-\epsilon)^{\frac{i}{2}}}{2^{i+1}}$, so the total $(1+\epsilon)$-error of $A$ after $i$ stages is at least $\sum_{k = 1}^i 2^{k-2} (\frac{\sqrt{\epsilon}(1-\epsilon)^{\frac{k}{2}}}{2^{k+1}})^{1+\epsilon}$. Thus

\begin{align*}
\opt_{1+\epsilon}(\mathcal{F}_{\infty}) \ge \\
\sum_{k = 1}^{\infty} 2^{k-2} (\frac{\sqrt{\epsilon}(1-\epsilon)^{\frac{k}{2}}}{2^{k+1}})^{1+\epsilon} = \\
\frac{\frac{1}{2}(\frac{\sqrt{\epsilon(1-\epsilon)}}{4})^{1+\epsilon}}{1-2 (\frac{\sqrt{1-\epsilon}}{2})^{1+\epsilon}} = \\
\Omega(\frac{(\epsilon(1-\epsilon))^{\frac{1+\epsilon}{2}}}{1-2 (\frac{\sqrt{1-\epsilon}}{2})^{1+\epsilon}}).
\end{align*}

Since $\epsilon^{\epsilon} = \Theta(1)$ and $(1-\epsilon)^{1+\epsilon} = \Theta(1)$ for $\epsilon \in (0, \frac{1}{2})$, we have $\opt_{1+\epsilon}(\mathcal{F}_{\infty}) = \Omega(\frac{\sqrt{\epsilon}}{1-2 (\frac{\sqrt{1-\epsilon}}{2})^{1+\epsilon}})$. Since $2^{\epsilon} = \Theta(1)$ for $\epsilon \in (0, \frac{1}{2})$, we have $\opt_{1+\epsilon}(\mathcal{F}_{\infty}) = \Omega(\frac{\sqrt{\epsilon}}{2^{\epsilon}-\sqrt{1-\epsilon}^{1+\epsilon}})$.

Note that $(1-\epsilon)^{\frac{1+\epsilon}{2}} \ge 1-\epsilon(1+\epsilon)$ for $\epsilon \in (0, \frac{1}{2})$. To check this, note that it is true when $\epsilon = 0$, and the derivative of $(1-\epsilon)^{\frac{1+\epsilon}{2}} - (1-\epsilon(1+\epsilon))$ is $2 \epsilon +1+ (1-\epsilon)^{\frac{1+\epsilon}{2}}(\frac{1}{2}\ln(1-\epsilon)-\frac{1}{2} -\frac{\epsilon}{1-\epsilon})$. For $\epsilon \in (0, \frac{1}{2})$, we have

\begin{align*}
2 \epsilon +1+ (1-\epsilon)^{\frac{1+\epsilon}{2}}(\frac{1}{2}\ln(1-\epsilon)-\frac{1}{2} -\frac{\epsilon}{1-\epsilon}) > \\
2 \epsilon +1+\frac{1}{2}\ln(1-\epsilon)-\frac{1}{2} -\frac{\epsilon}{1-\epsilon} > \\
2 \epsilon +1+\frac{1}{2}\ln(1-\frac{1}{2})-\frac{1}{2} -\frac{\epsilon}{1-\epsilon}  > \\
2 \epsilon +1-\frac{1}{2}\ln(2)-\frac{1}{2} -2\epsilon  >
0.
\end{align*}

Thus, $\opt_{1+\epsilon}(\mathcal{F}_{\infty}) = \Omega(\frac{\sqrt{\epsilon}}{2^{\epsilon}-1+\epsilon(1+\epsilon)})$. 

Also note that $2^{\epsilon} \le 1+\epsilon$ for $\epsilon \in (0, 1)$. There is equality at $\epsilon = 0$ and $\epsilon = 1$, and the derivative of $1+\epsilon - 2^{\epsilon}$ is $1-2^{\epsilon}\ln{2}$, which is positive for $\epsilon \in (0, \frac{-\ln{\ln{2}}}{\ln{2}})$ and negative for $\epsilon \in (\frac{-\ln{\ln{2}}}{\ln{2}}, 1)$. Thus $2^{\epsilon}-1+\epsilon(1+\epsilon) < 3\epsilon$, so $\opt_{1+\epsilon}(\mathcal{F}_{\infty}) = \Omega(\epsilon^{-\frac{1}{2}})$.
\end{proof}

The next corollary follows from Theorem \ref{lowerbound}, again using the fact that $\opt_p(\mathcal{F}_{\infty}) \le \opt_p(\mathcal{F}_r) \le \opt_p(\mathcal{F}_q)$ whenever $1 \le q \le r$.

\begin{cor}
If $\epsilon \in (0, 1)$, then $\opt_{1+\epsilon}(\mathcal{F}_q) = \Omega(\epsilon^{-\frac{1}{2}})$ for all $q \ge 1$, where the constant does not depend on $q$.
\end{cor}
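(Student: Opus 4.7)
The proof plan is essentially a one-line deduction. First I would invoke Theorem \ref{lowerbound}, which gives $\opt_{1+\epsilon}(\mathcal{F}_{\infty}) = \Omega(\epsilon^{-1/2})$ with an implied constant that depends only on $\epsilon$. Then I would chain this with the monotonicity inequality $\opt_{1+\epsilon}(\mathcal{F}_{\infty}) \le \opt_{1+\epsilon}(\mathcal{F}_q)$, which holds for every $q \ge 1$ and was already recorded in the introduction as a consequence of the inclusion $\mathcal{F}_{\infty} \subseteq \mathcal{F}_q$. That inclusion itself is immediate from the definitions: if $\sup_{x}|f'(x)| \le 1$, then $\int_0^1 |f'(x)|^q\, dx \le 1$ for every $q \ge 1$.

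Chaining yields $\opt_{1+\epsilon}(\mathcal{F}_q) \ge \opt_{1+\epsilon}(\mathcal{F}_{\infty}) = \Omega(\epsilon^{-1/2})$, and since the constant produced by Theorem \ref{lowerbound} is the same regardless of which $\mathcal{F}_q$ we are comparing to, the implied constant in the corollary is also uniform in $q$. This is the entire content of the proof.

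There is essentially no obstacle. The only subtlety worth mentioning in a single sentence is that the adversarial sequence built in the proof of Theorem \ref{lowerbound} is a family of piecewise linear interpolations with slopes of magnitude at most $1$, hence genuinely lies in $\mathcal{F}_{\infty}$; this is precisely why embedding it into any larger class $\mathcal{F}_q$ with $q \ge 1$ preserves the lower bound verbatim, with no change of constants.
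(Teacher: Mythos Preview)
Your proposal is correct and matches the paper's own argument exactly: the paper simply states that the corollary follows from Theorem~\ref{lowerbound} together with the monotonicity $\opt_p(\mathcal{F}_{\infty}) \le \opt_p(\mathcal{F}_q)$ for $q \ge 1$. One small wording quibble: the implied constant from Theorem~\ref{lowerbound} is absolute (not ``depending only on $\epsilon$''), which is precisely what makes it uniform in $q$ as you later note.
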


\section{Discussion}

With the results in this paper, the value of $\opt_p(\mathcal{F}_q)$ is now known up to a constant factor for all $p, q \ge 1$ except for $(p, q)$ with $p \in (1, \infty)$ and $q \in (1, 2)$. It remains to investigate $\opt_p(\mathcal{F}_q)$ when $p \in (1, \infty)$ and $q \in (1, 2)$, and to narrow the constant gap between the upper and lower bounds for $\opt_{1+\epsilon}(\mathcal{F}_q) = \Theta(\epsilon^{-\frac{1}{2}})$ when $\epsilon \in (0, 1)$ and $q \in [2, \infty) \cup \left\{ \infty \right\}$.

A possible extension of this research is to investigate analogues of these problems for multivariable functions. Previous research on learning multivariable functions \cite{barron, hardle, haussler} has focused on expected loss rather than worst-case loss, using models where the inputs $x_i$ are determined by a probability distribution. 

In \cite{smooth2}, Long investigated a finite variant of $\opt_1(\mathcal{F}_q)$ for $q \ge 2$ that depends on the number of trials $m$. It seems interesting to extend this variant to $\opt_p(\mathcal{F}_q)$ for $p = 1+\epsilon$ with $0 < \epsilon < 1$ and $q \ge 1$, since $\opt_{1+\epsilon}(\mathcal{F}_q)$ can grow arbitrarily large as $\epsilon \rightarrow 0$.


\end{document}